\patchcmd{\section}{\scshape}{\bfseries}{}{}
\renewcommand{\@secnumfont}{\bfseries}
\newtheorem{thm}{Theorem}[section]
\newtheorem{Definition}[thm]{Definition}
\newtheorem{Proposition}[thm]{Proposition}
\newtheorem{remark}[thm]{Remark}
\newcommand{\xhat}{\ensuremath{\widehat{x}}}
\newcommand{\smoothingsigma}{\sigma}
\newcommand{\energy}{\phi}
\newcommand{\fxhat}{\pi}
\newcommand{\cdf}{\Gamma}
\newcommand{\deltax}{\delta}
\newcommand{\xhateps}{XHAT$_\epsilon$}
\newcommand{\sign}{{\rm{sign}}}
\DeclareMathOperator*{\expectation}{\mathbb{E}}
\DeclareMathOperator*{\argmax}{\rm{argmax}}
\def\<{\begin{equation}}
\def\>{\end{equation}}	
\begin{document}

\title[Provable Robust Classification via learned smoothed densities]{Provable Robust Classification via \\learned smoothed densities}

\author{Saeed Saremi$^{1,2}$}
\address{$^1$NNAISENSE, Inc.}
\address{$^2$Redwood Center for Theoretical Neuroscience, UC Berkeley}
\email{saeed@berkeley.edu}

\author{Rupesh Srivastava$^1$}
\email{rupesh@nnaisense.com}

\date{May 9, 2020}
\begin{abstract}
Smoothing classifiers and probability density functions with Gaussian kernels appear unrelated, but in this work, they are unified for the problem of robust classification. The key building block is approximating the \emph{energy function} of the random variable $Y=X+N(0,\sigma^2 I_d)$ with a neural network which we use to formulate the problem of robust classification in terms of $\widehat{x}(Y)$, the \emph{Bayes estimator} of $X$ given the noisy measurements $Y$. We introduce \emph{empirical Bayes smoothed classifiers} within the framework of \emph{randomized smoothing} and study it theoretically for the two-class linear classifier, where we show  one can improve their robustness above \emph{the margin}. We test the theory on MNIST and we show that with a learned smoothed energy function and a linear classifier we can achieve provable $\ell_2$ robust accuracies that are competitive with empirical defenses. This setup can be significantly improved by \emph{learning} empirical Bayes smoothed classifiers with adversarial training and on MNIST we show that we can achieve provable robust accuracies higher than the state-of-the-art empirical defenses in a range of radii.  We discuss some fundamental challenges of randomized smoothing based on a geometric interpretation due to concentration of Gaussians in high dimensions, and we finish the paper with a proposal for using walk-jump sampling, itself based on learned smoothed densities, for robust classification.
\end{abstract}
\maketitle
\section{Introduction}
\subsection{The art of smoothing.} 	
Core to the problem of \emph{robust classification} is the issue of the  \emph{smoothness} of  classifiers in the ambient space $\mathbb{R}^d$:
 \begin{itemize} 
 \item[ (i.1)]  It is important to note that  \emph{Bayes optimal classifiers}, approximated via \emph{empirical risk minimization} ~\citep{vapnik1992principles}, may not be sufficitently smooth in high dimensions. One can enforce a degree of smoothness in the hypothesis class (e.g. restricting it to \emph{linear} in the extreme case) but this is typically much less in our control when opting for a richer class of functions, say when the classifiers are parameterized by deep neural networks. Adding to the complexity is the random variable $X$ in $\mathbb{R}^d$ that the classification problem\textemdash the mapping from $X$ to discrete labels\textemdash is defined for. \emph{It is clear that the smoothness of a classifier must be viewed in relation to the distribution of $X$ and its concentration in $\mathbb{R}^d$.}

\item[ (i.2)] The fact that Bayes optimal classifiers may not be ``sufficiently'' smooth goes against our low-dimensional intuitions, where we do associate good generalization to smoothness (a less wiggly decision boundary) but  these low-dimensional intuitions (un)fortunately break down in high dimensions, where in practice \emph{interpolation}, zero or near zero training loss, often leads to good generalization~\citep{belkin2018understand}. However, the common practice of interpolation with heavily \emph{overparametrized} neural networks has turned out to be a recipe for disaster regarding robust classification as exemplified by \emph{adversarial examples}~\citep{biggio2013evasion, szegedy2013intriguing}. See~\citep{belkin2018overfitting} for a rigorous study on this topic.

\item[ (ii.1)] We seem to be mainly left with two choices. The first is to let go of empirical risk minimization as \emph{the} framework for learning. This is advocated strongly in \citep{madry2017towards} (also see~\citep{goodfellow2014explaining}) from the perspective of \emph{robust optimization}~\citep{wald1945statistical} where instead of minimizing the expected risk, one opts for minimizing the expectation over the maximum risk (where the inner maximization is restricted to some bounded set around each data point, the so-called \emph{attack model}). Far from rigorous, but we may view this as \emph{implicitly} regularizing the smoothness of the classifier, where its degree of smoothness is controlled by the strength of the attack model. For $\ell_p$ attacks, this ``strength'' is correlated with $p$ ($\ell_\infty$ being the strongest) and the radius of the $\ell_p$ ball. There is in fact some empirical evidence in support of this implicit regularization viewpoint of adversarial training, see Figure 2 in~\citep{qin2019adversarial} for an example of such studies.

\item[ (ii.2)] The second approach is simpler conceptually and better understood theoretically, where smoothing a classifier is achieved \emph{explicitly} with a Gaussian kernel and more importantly one can prove guarantees for robustness. Given a non-robust ``base classifier'' $h$, a \emph{provably robust} classifier $g_\sigma[h]$ is constructed, where its degree of smoothness is controlled by the \emph{kernel bandwidth} $\sigma$. Although smoothing kernels have a very deep history in machine learning and statistics\footnote{Visit \url{https://francisbach.com/cursed-kernels/} for an introduction.}, e.g. for the problem of \emph{density estimation}~\citep{parzen1962estimation}, the utility of (Gaussian) noise for smoothing classifiers is a recent development under the topic of ``randomized smoothing''~\citep{lecuyer2018certified,li2018second,cohen2019certified}. The strongest result was optained by~\citep{cohen2019certified}, where they proved a tight $\ell_2$ bound for the robustness of the $\sigma$-smoothed classifier $g_\sigma[h]$ that this work builds on. Lastly, the implicit smoothing (via adversarial training) and the explicit one (via Gaussian noise) can be combined ~\citep{salman2019provably} which we build on as well.

\item[ (iii)] As we alluded to earlier, one important aspect of the problem of robust classification is the distribution of $X$ and more importantly its ``geometry'', where in $\mathbb{R}^d$ its measure is concentrated. This is typically put aside since in high dimensions \emph{density estimation} and \emph{generative modeling} are much harder problems than classification. However, there has been recent progress on an \emph{easier} problem of learning the  (unnormalized) \emph{smoothed density} of  $Y=X+N(0,\sigma^2 I_d)$~\citep{saremi2019neural, saremi2018deep} which plays a central role in this work. Next we discuss how to integrate learned smoothed densities together with \emph{empirical Bayes}~\citep{robbins1956empirical} in randomized smoothing.  


\end{itemize}

\subsection{Empirical Bayes Smoothed Classifiers.} After a  conceptual tour on the problem of robust classification from the lens of smoothing, we outline our main technical contribution on bringing together \emph{randomized smoothing} developed for constructing provable robust classifiers~\citep{cohen2019certified} and~\emph{neural empirical Bayes} developed for approximating unnormalized densities with empirical Bayes~\citep{saremi2019neural}. There is  simplicity and elegance in constructing the robust classifier $g$, and Theorem 1 proved in~\citep{cohen2019certified}, but there are remaining issues, the most important of which is related to the fact that there is a mismatch between the distribution of the random variable $$ Y=X+N(0,\sigma^2 I_d),$$
which the $\sigma$-smoothed classifier $g$ \emph{effectively} sees, and the distribution of the random variable $X$ for which the original classification problem $X\rightarrow \Omega$ was defined. Algebraically, this mismatch is expressed by $ f_Y = f_X * f_N,$
where $f_Y$ denotes the \emph{probability density function} associated with the random variable $Y$, $f_X$ the p.d.f. associated with $X$, etc., but regarding the concentration of $Y$ and $X$, this mismatch is in fact more severe for $d \gg 1$, discussed in~\citep{saremi2019neural} under the subject  ``manifold disintegration-expansion''. Our first quest is to bridge this gap in the framework of randomized smoothing.





The theoretical framework to bridge this gap is \emph{empirical Bayes} which was devised for the problem of estimating $X$ from noisy observations $Y$~\citep{robbins1956empirical}; for Gaussian noise $Y=X+N(0,\sigma^2 I_d)$, the \emph{Bayes estimator} of $X$ given the noisy measurement $Y=y$ can be written in closed form~\citep{miyasawa1961empirical}:
\< \nonumber \label{eq:Miy61} \widehat{x}(y) = y + \sigma^2 \nabla \log f_Y(y).\>
In~\citep{saremi2019neural} this empirical Bayes machinery was used to approximate $\nabla \log f_Y$ starting with a neural network parametrization of the energy function of $Y$ ({\it the energy function is defined as the negative log probability density function modulo a constant}). The end result is \<\label{eq:xhat} \widehat{x}(y) = y - \sigma^2 \nabla \phi(y), \>
where $\phi$ is the (learned) energy function of $Y$ (see Remark~\ref{rem:phi}). This (approximation to the Bayes estimator) of $X$ leads to the following definition.

%

\begin{Definition}[Empirical Bayes Smoothed Classifier] \label{def:xhat} Let $h: X\rightarrow \Omega$ be a classifier defined on $X$, and let $\energy$ be the (learned) energy function of $Y=X+N(0,\sigma^2 I_d)$. 
\begin{itemize}
\item The classifier $\fxhat$ is defined as
	\begin{equation} \label{eq:pi}
 		 \fxhat(\cdot) = h(\xhat(\cdot)),
	\end{equation}
	where $\widehat{x}(\cdot)$ is given by Equation~\ref{eq:xhat}.
\item The associated $\sigma$-smoothed classifier $g_\sigma[\fxhat]$, which we refer to as empirical Bayes smoothed classifier is given by
 \begin{equation} \label{eq:gpi}
g_\sigma[\pi](x) = \argmax_k \mathbb{P}(\fxhat(x+\varepsilon)=k),{\ \rm where\ } \varepsilon \sim N(0,\sigma^2 I_d),
\end{equation}
where $k\in \Omega$ are the class indices. 
 \end{itemize}
\end{Definition}
Note that the noise distribution used in smoothing $\fxhat$ to $g_\sigma[\fxhat]$ must be the same as the one that generated $Y$ from $X$. Therefore $\fxhat$ is defined such that in deriving the $\sigma$-smoothed classifier $g_\sigma[\fxhat]$, the ``base classifier'' $h$ is evaluated at samples from $\widehat{x}(Y),$ in contrast to the vanilla randomized smoothing where the base classifier is evaluated at samples from $Y$. \emph{In essence, $\widehat{x}(Y)$ forms a bridge between $X$ and $Y$, which is captured in the definition of $\pi$ and its associated $\sigma$-smoothed $g_\sigma[\pi]$.} 

 
 \begin{remark} \label{rem:denoise}
 	The Bayes estimator $\widehat{x}(y)$ can indeed be viewed as a denoising expression to infer the ``clean'' $X$ from the noisy measurement $Y=y$, but note that the Bayes estimator is the least-squares estimator~\citep{robbins1956empirical}. Therefore, the best one can do\textemdash in the least-squares sense\textemdash is to first learn the energy function of $Y$ and then use Eq.~\ref{eq:xhat} to estimate $X$; see ~\citep{saremi2019approximating} for technical details regarding the energy function vs score function paramterization.
 \end{remark}
 
 \begin{remark} \label{rem:phi}
 	In this paper, we drop the learned parameters of $\phi$ for a clean notation. The assumption is we have already learned $\phi$ for any $\sigma$ of interest with DEEN (see Section 4.1 in~\citep{saremi2019neural}). Also in the expression~(\ref{eq:xhat}), the r.h.s is an approximation to the l.h.s., but the error is small assuming we have access to large amounts of unlabeled data and assuming the neural network $\phi$ itself is expressive~\citep{lu2017expressive}.
 \end{remark}

 \subsection{Summary of contributions.} Next we summarize our contributions that build on \emph{empirical Bayes smoothed classifier}, Definition~\ref{def:xhat}. In the list below, the headers indicate the respective sections in the paper.
 \begin{itemize} 
 \item[\bf{\ref{sec:proof}}.] We revisit the $\ell_2$ robustness of two-class linear classifiers $$ h(x)=\sign(\langle w,x\rangle +b)$$ and we prove that $\pi[h]$ can indeed improve their robustness beyond the \emph{margin}. This result is encapsulated in Proposition~\ref{prop:linear}. The analysis is done when $X$ is distributed as a Gaussian, but we also discuss \emph{mixture of Gaussians}. 
 \item[\bf{\ref{sec:mnist1}}.] We put our analysis to test, beyond the  Gaussian-distributed data, by studying the linear classifier on MNIST and we demonstrate that we can improve their \emph{certified $\ell_2$ robust accuracy} significantly, around \emph{50\%} in a range of radii.
 \item[\bf{\ref{sec:XHAT}}.] The Definition~\ref{def:xhat} requires an already trained classifier, which is suboptimal for achieving the highest provable robust accuracy. To remedy that, we outline an algorithmic framework to \emph{learn} the ``base classifier'' $\pi$ by integrating \emph{randomized smoothing}~\citep{cohen2019certified}, \emph{neural empirical Bayes}~\citep{saremi2019neural}, and \emph{smoothed adversarial training}~\citep{salman2019provably}.  The algorithm is named \xhateps~ for the roles played by the Bayes estimator $\widehat{x}(y)$ and the adversarial training defined by $\epsilon$.
   \item[\bf{\ref{sec:mnist2}}.] We test \xhateps~ on MNIST and show that we can improve the $\ell_2$ robust accuracy of the best \emph{empirical defenses} at the time of writing this paper~\citep{schott2018towards}, in particular we achieve a \emph{provable robust accuracy} of (at least) \emph{95\%} at the radius \emph{1.0} and \emph{81\%} at \emph{1.5}. 
 \end{itemize}
 

 \begin{figure}[b!]
 \begin{center}
 \begin{subfigure}[$h(x+\varepsilon)$]{\includegraphics[width=0.35\textwidth]{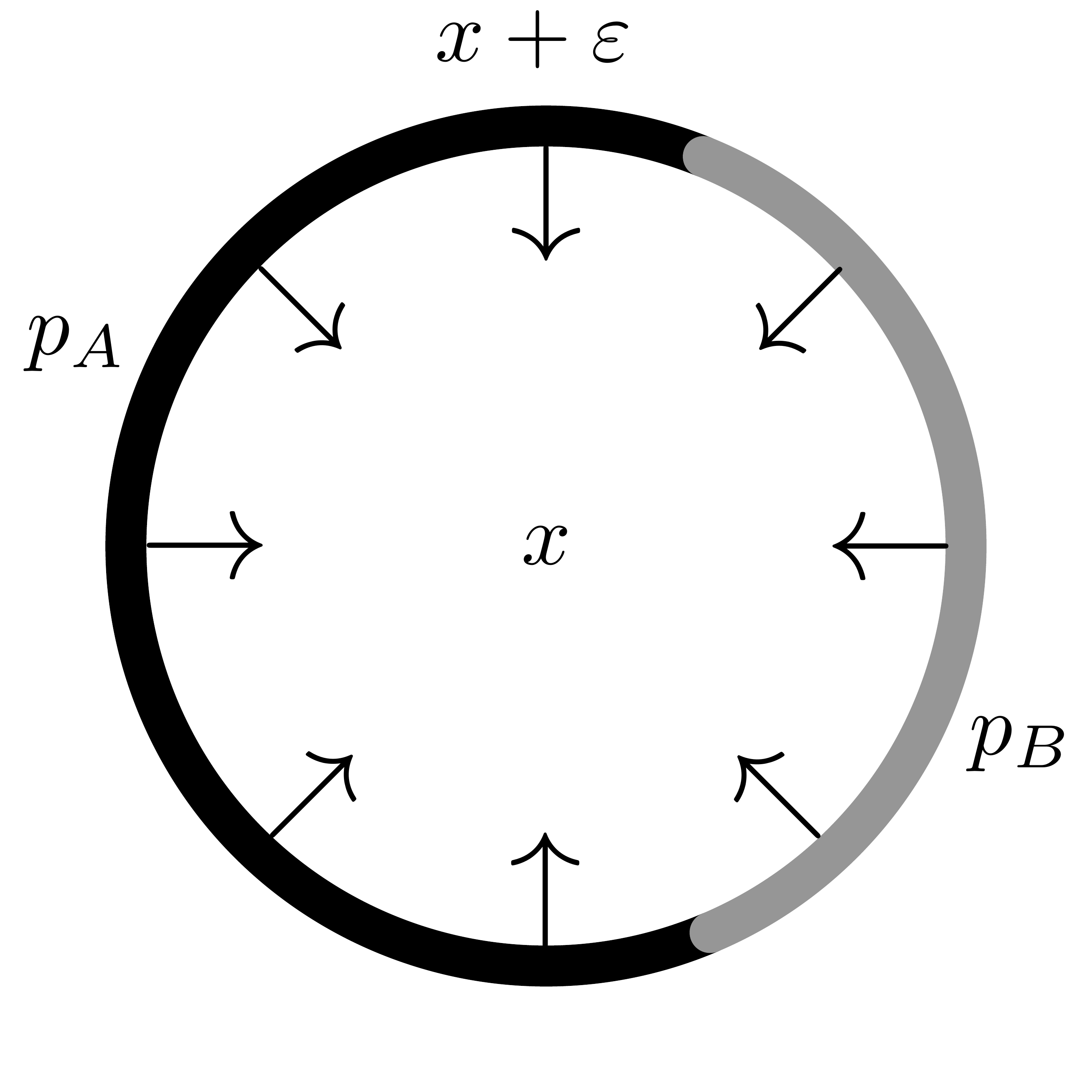}%
 }
 \end{subfigure}
 \begin{subfigure}[$\pi(x+\varepsilon)$]{\includegraphics[width=0.35\textwidth]{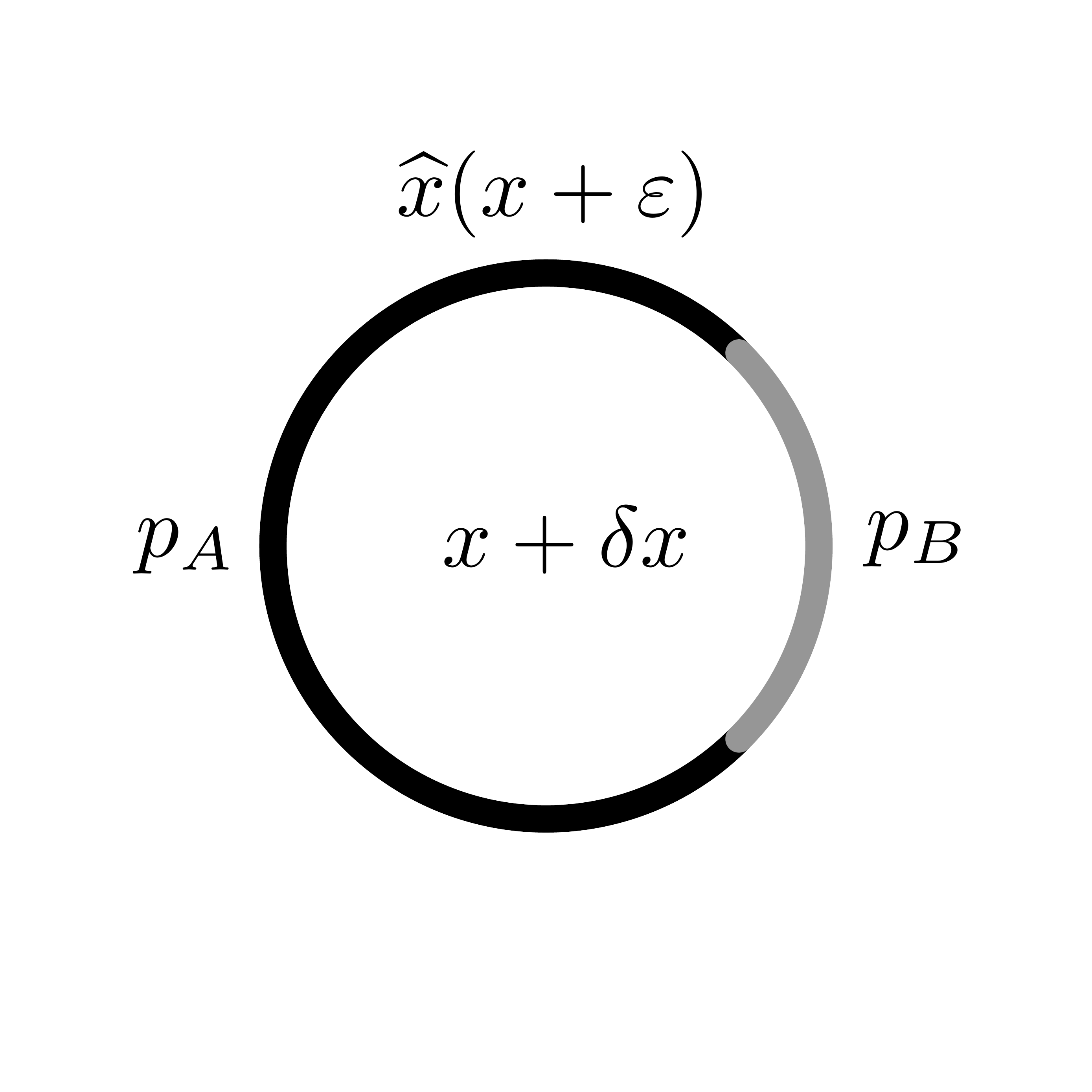}%
 }
 \end{subfigure}
 \caption{  (a)  Schematics of the output of the classifier $h$ at $x+\varepsilon$ where $\varepsilon\sim N(0,\sigma^2 I_d)$, class $A$ (the winner class) is in black and class $B$ in gray, $p_A$ and $p_B$ denote their total probability mass as measured by the Gaussian $N(0,\sigma^2 I_d)$. The arrows represent $-\nabla \phi$ evaluated at $x+\varepsilon$, where $\phi$ is the energy function  of the random variable $Y=X+N(0,\sigma^2 I_d)$. (b) Schematics of Definition~\ref{def:xhat}.  Fixing $x$, the Bayes estimator shrinks the noise, visualized here with a smaller radius for $\widehat{x}(x+\varepsilon)$ in comparison with $x+\varepsilon$ in (a). In addition, there are subtle side effects where $x$ itself ``slides'' to $x+\delta x$ discussed in Section~\ref{sec:noiseshrinkage}. Shrinking the noise is expected to increase $p_A$ in comparison to (a) which results in a more robust classifier. See Remark~\ref{rem:sphere} for more on the schematics.}
 \label{fig:xhat}
 \end{center}
 \end{figure}
 
\clearpage
\section{Empirical Bayes Smoothed Classifiers} \label{sec:MBC}
\subsection{Randomized smoothing.}
To develop some intuitions on the construction of $\pi$ in Definition~\ref{def:xhat}, we consider 
 the two-class linear classifier \begin{equation}  \nonumber \label{eq:linear}
 h(x)=\sign(\langle w,x\rangle +b),	
 \end{equation}
 where $x \sim X$ in $\mathbb{R}^d$, $w \in \mathbb{R}^d$ and $b\in \mathbb{R}$. It is straightforward to see that the linear classifier $h$ is $\ell_2$ robust with the radius given by {\it the margin} at $x$ (the distance to the decision boundary): $$ h(x+\deltax) = h(x)\text{ if } \Vert \deltax \Vert  < r(x),$$
 \< \nonumber \label{eq:margin} r(x) = \vert\langle w,x\rangle +b\vert/\Vert w \Vert, \> 
 where $\Vert \cdot \Vert$ stands for the $\ell_2$ norm. It was shown in~\citep{cohen2019certified} that for the linear classifier $h$, the $\sigma$-smoothed classifier $g_\sigma[h]$ defined by
 \begin{equation} \label{eq:g[h]}
	g_\sigma[h](x)= \argmax_{k} \mathbb{P}(h(x+\varepsilon) = k),\ {\rm where}\ \varepsilon \sim N(0,\smoothingsigma^2 I_d),
\end{equation}
 is identical to $h$ (this is easy to see geometrically by drawing circles for the Gaussian $N(0,\sigma^2 I_d)$ and a line for the decision boundary).

 The construction of $\sigma$-smoothed classifier $g$  (short for $g_\sigma[h]$) in Equation~\ref{eq:g[h]} for any ``base classifier'' $h$ is at the heart of randomized smoothing, where the guaranteed $\ell_2$ robustness:
 $$ g(x+\deltax)=g(x)\text{ if } \Vert \deltax \Vert < r(x) $$
 was proven in consecutive papers~\citep{lecuyer2018certified,li2018second,cohen2019certified} derived from different starting points and with different expressions for $r(x)$. The strongest result was obtained in~\citep{cohen2019certified} with analysis that was based on \emph{statistical hypothesis testing}~\citep{neyman1933ix} and implementations based on~\citep{hung2019rank}. In particular, they derived a tight bound  for $r(x)$ given by the expression
\< \label{eq:r(x)} r(x) = \frac{\sigma}{2} \left(\cdf^{-1}(p_A(x,\sigma))-\cdf^{-1}(p_B(x,\sigma))\right),\>
where $p_A(x,\sigma)$ is the total probability mass of the winner class $k_A$,
$$p_A(x,\sigma)=\mathbb{P}(h(x+\varepsilon)=k_A),$$	
as measured by $\varepsilon \sim N(0,\sigma^2 I_d)$, 
 $p_B(x,\sigma)$ is that of the runner up class, and $\cdf^{-1}$ is the inverse \emph{cumulative density function} of the standard normal distribution $N(0,1)$ (see Figure~\ref{fig:xhat}a for the schematics). For a two-class classifier, $\cdf^{-1}(p_B)=-\cdf^{-1}(p_A)$, Equation~\ref{eq:r(x)} is simplified to \<r(x)= \sigma \cdf^{-1}(p_A(x,\sigma)).\> The expression above for $r(x)$ was computed in~\citep{cohen2019certified} for two-class linear classifiers and it was shown to be identical to the margin. Next, we extend this analysis for $g_\sigma[\pi]$.

\subsection{Two effects of Bayes estimation.} \label{sec:noiseshrinkage} Before analyzing the robustness of $g_\sigma[\pi]$ we first need an expression for the Bayes estimator $\widehat{x}(x+\varepsilon)$.  We start with $$X= N(0,\sigma_0^2 I_d).$$ It follows $$Y =  N(0,(\sigma_0^2 +\sigma^2)I_d),$$ and the Bayes estimator of $X$, $$\xhat(y)=y+ \sigma^2 \nabla \log f_Y(y),$$
simplifies to \< \label{eq:betay} \xhat(y) = \beta y,\>
where the scaling factor $0<\beta<1$ is defined by \< \nonumber \label{eq:beta} \beta^{-1}= 1+(\sigma/\sigma_0)^2.\>

\emph{Fixing} $x$, the noisy samples $$y=x+\varepsilon,~\varepsilon \sim N(0,\sigma^2 I_d)$$ form a Gaussian ball around $x$, and $$\xhat(x+\varepsilon)=\beta x + \beta \varepsilon $$ are visualized by two effects:
\begin{itemize}
	\item[(E1)] (\emph{contraction of the Gaussian ball}) The most prominent effect is the contraction of noise, and it is what empirical Bayes was designed to do in the first place. In this simple setup, it takes the form of scaling the Gaussian ball by the factor $\beta<1$. 
	
	\item[(E2)] (\emph{sliding to low-energy regions})  This ``side effect'' is not simple to analyze in general, where the Gaussian ball $x+\varepsilon$ slides from $x$ (mostly) towards high-density (low-energy) regions in $X$. For $X=N(0,\sigma_0^2 I_d)$, this phenomenon takes the simple form$$ \expectation \xhat(x+\varepsilon) = \beta x.$$
\end{itemize}

\begin{remark} \label{rem:sphere}
	It is insightful to consider the geometry of the Gaussian $N(0,\sigma^2 I_d)$ in high dimensions $d\gg1,$ approximated by the uniform distribution on the $(d-1)$ dimensional sphere of radius $\sigma \sqrt{d}$~\citep{vershynin2018high}: \< \nonumber N(0,\sigma^2 I_d) \approx \text{Unif}~(\sigma \sqrt{d} S_{d-1}).\>
	Fixing $x$, $x+\varepsilon$ can be visualized by uniformly distributed samples in the sphere $\sigma \sqrt{d} S_{d-1}$ centered at $x$. In this picture, the estimator $\widehat{x}(x+\varepsilon)$  reduces the radius of the sphere from $\sigma\sqrt{d}$ to $\beta \sigma\sqrt{d}$, and the center of the sphere ``slides'' from $x$ to $\beta x$ closer to the origin. Note that, in general, the Bayes estimation breaks the spherical symmetry of $x+\varepsilon$.  
\end{remark}
\subsubsection{\bf Mixture of Gaussians.} \label{sec:mixture} The two effects E1 and E2 of  empirical Bayes estimation $\widehat{x}(x+\varepsilon)$ on the Gaussian ball $x+\varepsilon$ centered at $x$ are general phenomena but they become algebraically more complex starting with  a mixture of isotropic Gaussians:
$$ f_X(x) \propto \exp\left(- \frac{\Vert x - \mu \Vert^2}{2\sigma_0^2} \right) + \exp\left(- \frac{\Vert x + \mu \Vert^2}{2\sigma_0^2} \right),$$
where (without a loss of generality) we chose a coordinate system such that $\expectation X = 0$. With this choice, the density of $Y=X+N(0,\sigma^2 I_d)$ is proportional to
$$f_Y(y)  \propto  \exp\left(- \frac{\Vert y - \mu \Vert^2}{2(\sigma^2+\sigma_0^2)} \right) + \exp\left(- \frac{\Vert y + \mu \Vert^2}{2(\sigma^2+\sigma_0^2)} \right).$$
It follows, 
\begin{eqnarray*}
 \widehat{x}(y) &=& y +\sigma^2 \nabla \log f_Y(y)\\
 &=&  y + \Big(\frac{\sigma^2}{\sigma^2+\sigma_0^2}\Big) \frac{-(y-\mu)M_1-(y+\mu) M_2}{M_1+M_2}\\
 &=& \Big(1-\frac{\sigma^2}{\sigma^2+\sigma_0^2}\Big)y+ \Big(\frac{\sigma^2}{\sigma^2+\sigma_0^2}\Big)\frac{M_1-M_2}{M_1+M_2}\mu
\end{eqnarray*}
where \begin{eqnarray*}
M_1 &=& \exp\left(- \frac{\Vert y - \mu \Vert^2}{2(\sigma^2+\sigma_0^2)} \right), \\
M_2 &=& \exp\left(- \frac{\Vert y + \mu \Vert^2}{2(\sigma^2+\sigma_0^2)} \right),
\end{eqnarray*}
therefore,
$$ \frac{M_1-M_2}{M_1+M_2} = \tanh\left(\sigma_0^{-2}\langle \beta y,\mu \rangle\right). $$

Putting all together, it follows,
\< \label{eq:xhat-mixture} \widehat{x}(y) = \beta y + (1-\beta)\tanh\left(\sigma_0^{-2}\langle \beta y,\mu \rangle\right) \mu ,\>

where $0<\beta<1$ is defined by 
$$ \beta^{-1}= 1+(\sigma/\sigma_0)^2.$$

Equation~\ref{eq:xhat-mixture} for the Bayes estimator is more complex than~(\ref{eq:betay})\textemdash as expected, $\widehat{x}(x+\varepsilon)$ no longer has a spherical symmetry\textemdash but the effects E1 and E2 that were discussed earlier are similar in nature, where the first term, $\beta y$, contracts the noise  and the second term moves the data in the direction of $\mu$ scaled by $(1-\beta)$, to the closest mixture as dictated by the sign of $\langle y,\mu \rangle$. 

\begin{remark} It is intriguing to consider the limit $\sigma \rightarrow \infty$ ($\beta \rightarrow 0$) where the ``sliding effect'' E2 that we discussed earlier takes an extreme form, where the estimator collapses to the origin, the ``middle ground'' between the mixtures. Fortunately, we are not interested in that regime for robust classification!
\end{remark}

\clearpage
\subsection{Improved robustness of linear classifiers with empirical Bayes.} \label{sec:proof}
Next we prove that the two effects E1 and E2 discussed in the previous section make the analysis of the robustness of the linear classifier nontrivial. 
\begin{Proposition} \label{prop:linear}
	Consider a two-class linear classifier $$ h(x)=\sign(\langle w,x\rangle +b), $$
	and  $X=N(0,\sigma_0^2 I_d)$ (centered at the origin without a loss of generality).
	\begin{itemize} 
		\item[(i)] The empirical Bayes smoothed classifier is given by  \<g_\sigma[\pi](x)=h(\beta x),\>
		where \<\beta^{-1} = 1+(\sigma/\sigma_0)^2.\>
		\item[(ii)] The smoothed classifier $g_\sigma[\pi](x)$ is $\ell_2$ robust with the radius given by the margin evaluated at $\beta x$, multiplied by $1/\beta$:
			$$g_\sigma[\pi](x+\deltax)=g_\sigma[\pi](x)\text{ if } \Vert \deltax \Vert < r(x),$$
			where
			\< \label{eq:prop:r(x)} r(x) = \beta^{-1}  \frac{\vert \langle w, \beta x \rangle + b \vert }{  \Vert w \Vert }  \>
	\end{itemize}
\end{Proposition}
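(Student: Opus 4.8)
The plan is to exploit the fact that for $X=N(0,\sigma_0^2 I_d)$ the Bayes estimator reduces to the pure contraction $\xhat(y)=\beta y$ of Equation~\ref{eq:betay}, so that the composed base classifier $\pi=h\circ\xhat$ is again linear. First I would substitute this into Equation~\ref{eq:pi} to obtain $\pi(x)=h(\beta x)=\sign(\beta\langle w,x\rangle+b)$, and then evaluate along the smoothing noise, $\pi(x+\varepsilon)=\sign\big(\beta\langle w,x\rangle+b+\beta\langle w,\varepsilon\rangle\big)$.

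The decisive reduction is that the only randomness surviving the projection is the scalar $\langle w,\varepsilon\rangle$, which for $\varepsilon\sim N(0,\sigma^2 I_d)$ is distributed as $N(0,\sigma^2\Vert w\Vert^2)$. Thus $\pi(x+\varepsilon)$ is the sign of a one-dimensional Gaussian with mean $\beta\langle w,x\rangle+b$ and standard deviation $\beta\sigma\Vert w\Vert$. Expressing the winner-class probability through the standard normal CDF and using its symmetry gives
$$ p_A(x,\sigma)=\cdf\!\left(\frac{\vert\beta\langle w,x\rangle+b\vert}{\beta\sigma\Vert w\Vert}\right), $$
with the same expression holding whichever class wins.

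For part (i) I would note that $g_\sigma[\pi](x)$ selects the class whose mass exceeds $1/2$, and $p_A>1/2$ precisely when $\beta\langle w,x\rangle+b$ keeps the sign it has, so the smoothed decision coincides with $\sign(\beta\langle w,x\rangle+b)=h(\beta x)$, yielding $g_\sigma[\pi](x)=h(\beta x)$. For part (ii) I would feed $p_A$ into the tight two-class randomized-smoothing radius $r(x)=\sigma\,\cdf^{-1}(p_A(x,\sigma))$ of~\citep{cohen2019certified}. Applying $\cdf^{-1}$ cancels the outer $\cdf$ and multiplying by $\sigma$ cancels the smoothing scale, leaving
$$ r(x)=\frac{\vert\beta\langle w,x\rangle+b\vert}{\beta\Vert w\Vert}=\beta^{-1}\,\frac{\vert\langle w,\beta x\rangle+b\vert}{\Vert w\Vert}, $$
which is Equation~\ref{eq:prop:r(x)}.

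The step I expect to require the most care is bookkeeping the two distinct appearances of $\beta$: one factor rescales the effective location to $\beta\langle w,x\rangle+b$ (effect E2), while a second factor contracts the effective noise standard deviation to $\beta\sigma\Vert w\Vert$ (effect E1), and it is exactly this second contraction that survives the $\sigma\,\cdf^{-1}$ combination and becomes the $\beta^{-1}$ amplification of the margin. I would also verify the sign/winner-class bookkeeping so that the absolute value is correct irrespective of which class wins, and observe that because $\xhat$ is affine here no breaking of spherical symmetry intervenes\textemdash that complication arises only in the mixture-of-Gaussians case of Section~\ref{sec:mixture}.
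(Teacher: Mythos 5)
Your proposal is correct and follows essentially the same route as the paper's proof: both reduce $\pi(x+\varepsilon)=h(\beta x+\beta\varepsilon)$ to the sign of a one-dimensional Gaussian with mean $\langle w,\beta x\rangle+b$ and standard deviation $\beta\sigma\Vert w\Vert$, derive $p_A(x,\sigma)=\cdf\big(\vert\langle w,\beta x\rangle+b\vert/(\beta\sigma\Vert w\Vert)\big)$, deduce (i) from the probability-exceeds-$\tfrac12$ criterion, and obtain (ii) by substituting into the two-class radius $r(x)=\sigma\,\cdf^{-1}(p_A)$ of~\citep{cohen2019certified}. Your closing observation about the two distinct roles of $\beta$ (location shift vs.\ noise contraction, with only the latter producing the $\beta^{-1}$ amplification) matches the paper's own remarks following its proof.
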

\begin{proof}	
In the proof, $Z$ is the standard Gaussian $N(0,1)$ with values in $\mathbb{R}$, $\mathbb{P}(\cdot)$ are the probabilities calculated under either $\varepsilon \sim N(0,\sigma^2 I_d)$ or $Z$, and $x$ is held fixed.

Start with statement (i) and the case $h(\beta x)=1$:
\begin{eqnarray*}
	h(\beta x)=1 &\equiv& \langle w,\beta x\rangle + b > 0 \\
	&\equiv& \frac{\langle w,\beta x\rangle + b}{\beta \sigma \Vert w\Vert}>0\\
	&\equiv& \mathbb{P}\left(Z< \frac{\langle w,\beta x\rangle + b}{\beta \sigma \Vert w\Vert}\right)>\frac{1}{2} \\
	&\equiv& \mathbb{P}\left( \beta \sigma \Vert w \Vert Z > -\langle w,\beta x\rangle -b \right)  > \frac{1}{2} \\
	&\equiv& \mathbb{P}(\langle w,\beta x + \beta \varepsilon \rangle + b >0)>\frac{1}{2}\\
	&\equiv& \mathbb{P}(\text{sign}\left(\langle w,\beta x + \beta \varepsilon \rangle + b\right)=1)>\frac{1}{2}\\
	&\equiv& \mathbb{P}(h(\beta x +\beta \varepsilon)=1)>\frac{1}{2} \\
	&\equiv&  g_\sigma[\pi](x)=1
\end{eqnarray*}
The proof follows through starting with $h(\beta x)=-1$ which is equivalent to $g_\sigma[\pi](x)=-1$. This proof is a modification of the calculation in~\citep{cohen2019certified} with the big difference that for vanilla randomized smoothing $g_\sigma[h](x) = h(x)$ but for empirical Bayes smoothed classifier $g_\sigma[\pi](x) = h(\beta x)$.

A short ``geometric proof'' of (i) is given by observing that $\widehat{x}(x+\varepsilon)=\beta x +\beta \varepsilon$
is an isotropic Gaussian that is centered at $\beta x$. It is clear geometrically that the mean of the shifted Gaussian will determine the output of the smoothed classifier:
$ g_\sigma[\pi](x) = h(\beta x).$

To prove (ii) we need to calculate $r(x)=\sigma \cdf^{-1}(p_A(x,\sigma))$, and there are two cases to consider.  Start with 
$g_\sigma[\pi](x)=h(\beta x)=1$ or equivalently $\langle w, \beta x \rangle + b >0 $:
\begin{eqnarray*}
	p_A(x,\sigma) &=& \mathbb{P}(\pi(x+\varepsilon)>0) \\
	 &=& \mathbb{P}(\langle w, \beta x +\beta \varepsilon \rangle + b>0)\\
	 &=& \mathbb{P}(\langle w, \beta  x \rangle - \beta \sigma \Vert w \Vert Z  + b>0)\\
	 &=& \mathbb{P}\left(Z<\frac{\langle w, \beta x \rangle + b}{\beta \sigma \Vert w \Vert}\right ) \\
	 &=& \cdf\left( \frac{\langle w, \beta x \rangle + b}{\beta \sigma \Vert w \Vert} \right)
\end{eqnarray*}
For the case $g_\sigma[\pi](x)=h(\beta x)=-1$ or equivalently $\langle w, \beta x \rangle + b <0 $, the calculation follows through as above:
$$ p_A(x,\sigma) = \cdf\left( -\frac{\langle w, \beta x \rangle + b}{\beta \sigma \Vert w \Vert} \right). $$
Therefore,
$$  p_A(x,\sigma) = \cdf\left( \frac{\vert \langle w, \beta x \rangle + b\vert}{\beta \sigma \Vert w \Vert} \right), $$
and the expression $$r(x) = \sigma \cdf^{-1}(p_A(x,\sigma))$$ reduces to
$$
r(x) = \frac{\vert \langle w, \beta x \rangle + b \vert }{\beta  \Vert w \Vert }.
$$
Note that $\vert \langle w, \beta x \rangle + b \vert / \Vert w \Vert$ is the distance to the decision boundary at $\beta x$ (\emph{not $x$}, as it is the case for vanilla randomized smoothing), and in addition we gain an extra factor $1/\beta = 1+(\sigma/\sigma_0)^2$ compared to the vanilla randomized smoothing where we are ``blind'' to the \emph{density} of $Y=X+N(0,\sigma^2 I_d).$
\end{proof}

In Section~\ref{sec:mixture} we considered the case where $X$ was distributed as a mixture of Gaussians and we found a closed-form expression for $\widehat{x}(y)$, but it was not feasible to repeat the calculations above due to the presence of the second term in Equation~\ref{eq:xhat-mixture}. That said, the effects E1 and E2 that we discussed in Section~\ref{sec:noiseshrinkage} which played major roles in deriving the expression~(\ref{eq:prop:r(x)}) for $r(x)$ are also at play for mixtures of Gaussians, most importantly the contraction of noise by the factor $1/\beta$, but this contraction is not ``clean'' in the case of mixtures due to the presence of $\tanh\left(\sigma_0^{-2}\langle \beta y,\mu \rangle\right)$.

\subsection{Experiments} \label{sec:mnist1}
Proposition~\ref{prop:linear} is indeed limited in  scope due to the assumption made on distribution of $X$, however going through the proof it is clear that there are two main factors at play that are also present for a general $X$: (E1) the contraction of noise schematized in Figure~\ref{fig:xhat}, (E2) the ``sliding'' of the Gaussian ball $x+\varepsilon$ to ``nearest'' low-energy modes of $X$. The first effect is robust and a consequence of the Bayes estimation which is in fact the engine for learning the parameters of $\phi$~\citep{saremi2019neural}. The second effect is subtle though and could have unexpected consequences! For example, in Proposition~\ref{prop:linear}, in the expression $g[\pi](x) = h(\beta x)$, $\beta x$ and $x$ could be on different sides of the decision boundary.

To put these ideas to test beyond Gaussian-distributed data, we looked at the robustness of the linear classifier on MNSIT~\citep{lecun1998gradient}, first  with vanilla randomized smoothing and then with the empirical Bayes smoothed classifier from Definition~\ref{def:xhat}. The results are presented in Figure~\ref{fig:linear} for the randomized smoothing noise levels $\sigma=0.3$ (Figure~\ref{fig:linear}a) and $\sigma=0.6$ (Figure~\ref{fig:linear}b). The dashed lines denote the certified accuracy of $g[h]$ and the solid lines that of $g[\pi]$. The base classifier $h$ was a 1-layer neural network, trained with cross entropy loss with the test accuracy {\it 0.9218}. DEEN is trained separately to learn $\phi$ which is then used for constructing $\pi$ and $g[\pi]$. 

 \begin{figure}[h!]
 \begin{center}
 \begin{subfigure}[$\sigma=0.3$]{\includegraphics[width=0.49\textwidth]{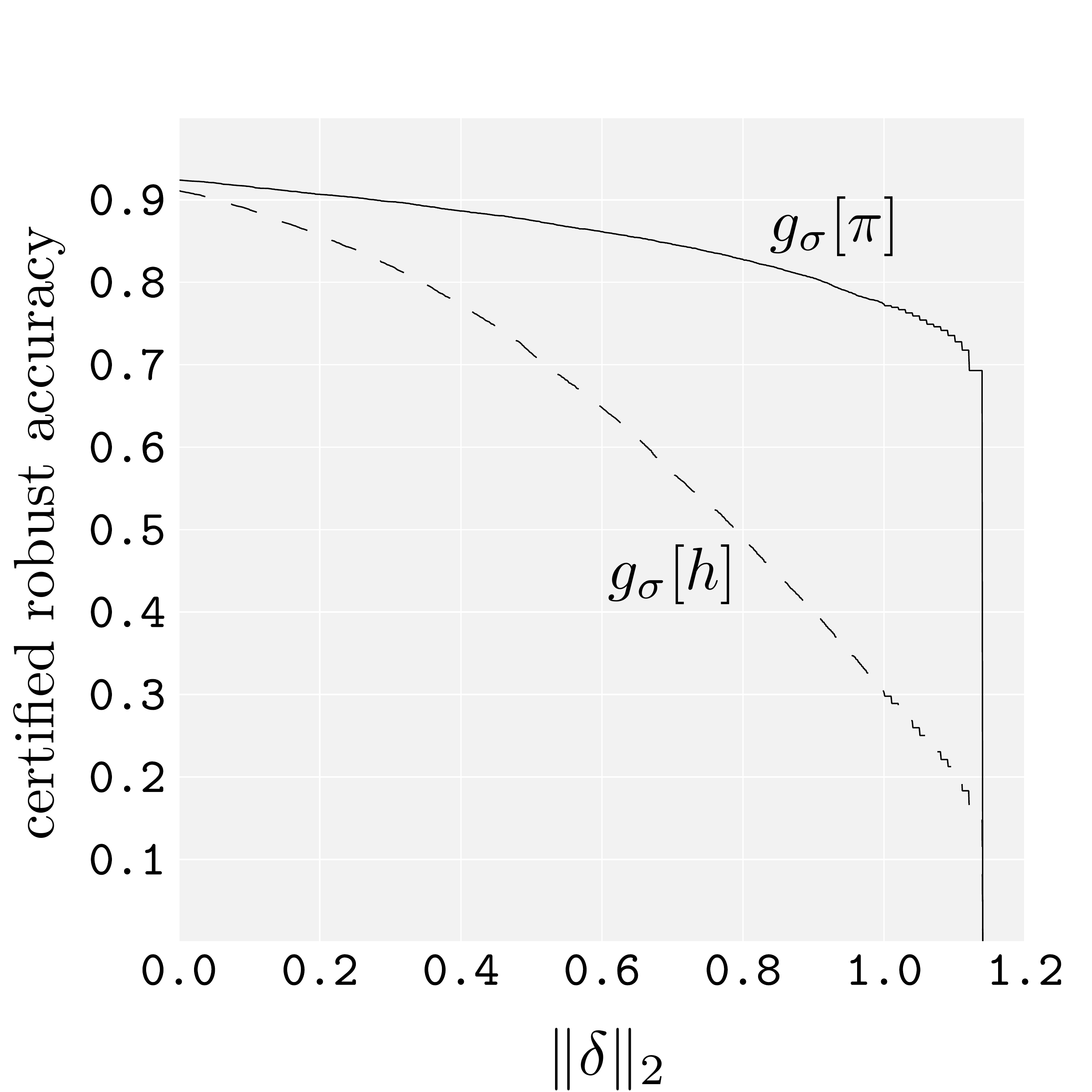}%
 } \end{subfigure} 
  \begin{subfigure}[$\sigma=0.6$]{\includegraphics[width=0.49\textwidth]{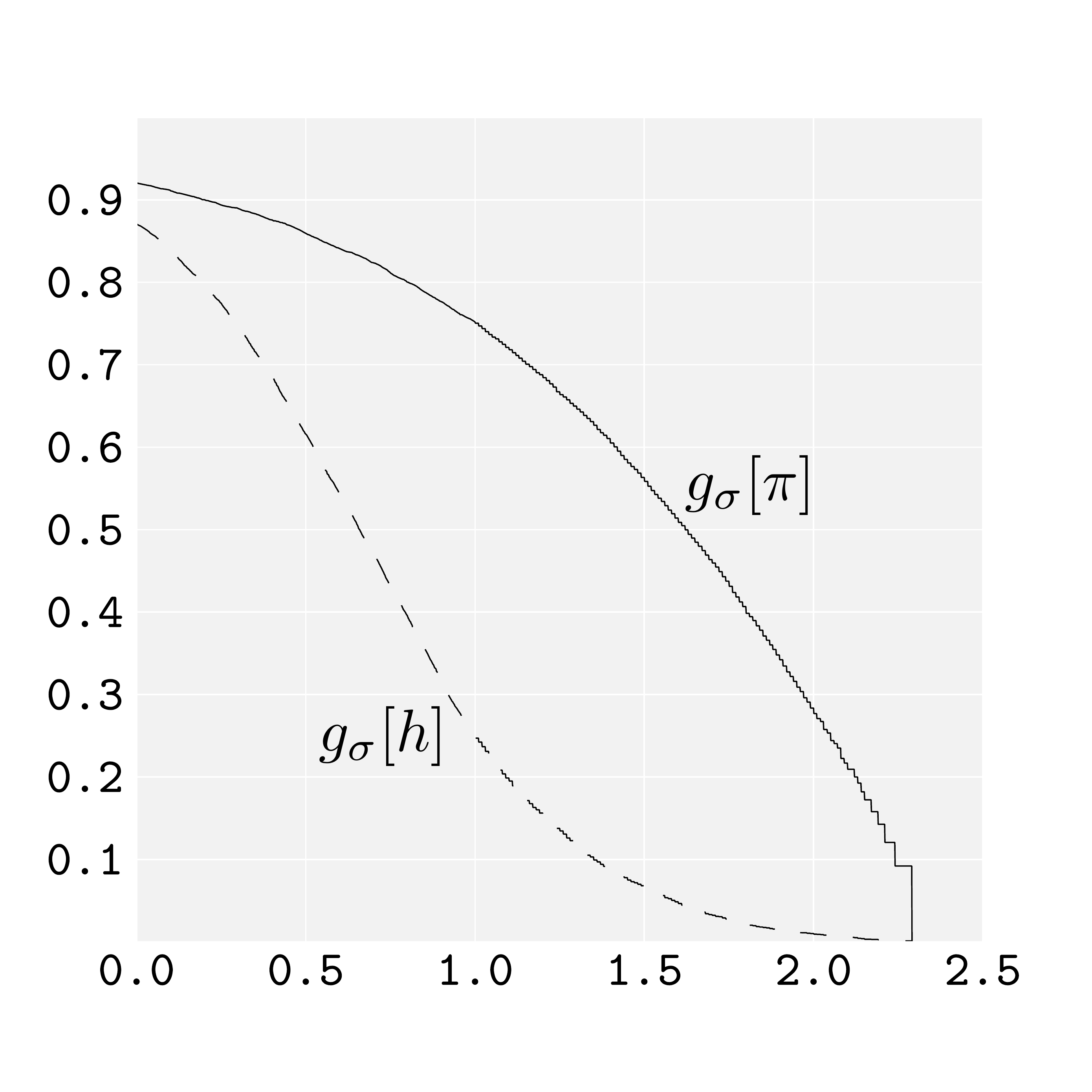}%
 } \end{subfigure}
 \caption{ (\emph{Linear MNIST certified robust accuracy}) The plot of the certified robust accuracy vs. $\Vert \deltax \Vert_2$, the $\ell_2$ radius of perturbations added to inputs, obtained on the test set with randomized smoothing~\citep{cohen2019certified} with \emph{failure probability} $\alpha=0.001$ and $n_c=10^5$ samples used for certification. The base classifier $h$ is the linear classifier, $\pi$ is obtained from Definition~\ref{def:xhat} after first learning $\phi$ with DEEN; $g_\sigma[h]$ and $g_\sigma[\pi]$ are the associated $\sigma$-smoothed classifiers. }\label{fig:linear}
 
 \end{center}
 \end{figure}

\clearpage
 
\begin{table}[h!]
\label{table:mnist-1}
\begin{center}
\begin{tabular}{l| c c c c}
\toprule
$\ell_2$ radius of perturbations& 0.5 & 1.0 & 1.5 & 2.0  \\
\midrule
smoothed linear classifier\  \ $g[h]$  & 71 & 30 & 7 & 1  \\
\emph{empirical Bayes smoothed} $g[\pi]$  & \emph{88}  & \emph{77} & \emph{56} & \emph{28} \\  
\bottomrule
\end{tabular}
\end{center}
\bigskip
\caption{(\emph{MNIST robust accuracy with empirical Bayes smoothed linear classifier}) A compilation of certified robust accuracies from Figure~\ref{fig:linear}, where the highest certified accuracy (in percentage) for $g[h]$ and $g[\pi]$  at several $\ell_2$ radius is reported. Note that the provable accuracies at radii 1.0 and 1.5 are competitive  to accuracies of empirical defences, see Figure 2a in~\citep{schott2018towards} for a collection. The results here are improved significantly by \emph{learning} the empirical Bayes smoothed classifier with adversarial training discussed in Sec.~\ref{sec:main}. For a comparison, see Table~\ref{table:mnist-2}. } 
\end{table}

\bigskip
 \begin{figure}[h!]
 \begin{center}
 \includegraphics[width=\textwidth]{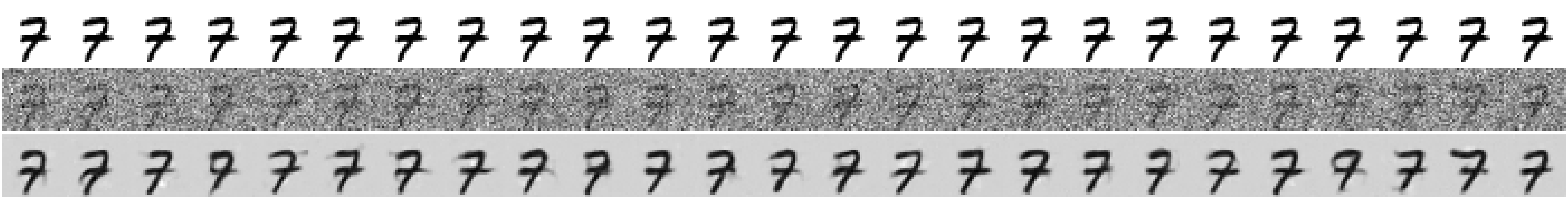}%
 \caption{(\emph{empirical Bayes smoothed classifiers under the hood}) A sample $x$ from MNSIT test set, repeated on the top row. The middle row shows $y_j = x+\varepsilon_j,~\varepsilon_j\sim N(0,\sigma^2 I_d)$, where $\sigma=0.6$; the bottom row shows the corresponding $\widehat{x}(y_j)$ (see Equation~\ref{eq:xhat}); in this example, $g_\sigma[h](x)$ and $g_\sigma[\pi](x)$ are both accurate  with $r(x)=0.0622$ and $r(x)=0.604$, obtained after taking $n_c=10^5$ samples.}\label{fig:vizxhat}
 \end{center}
 \end{figure}
 
 \bigskip
  \begin{figure}[h!]
 \begin{center}
 \includegraphics[width=\textwidth]{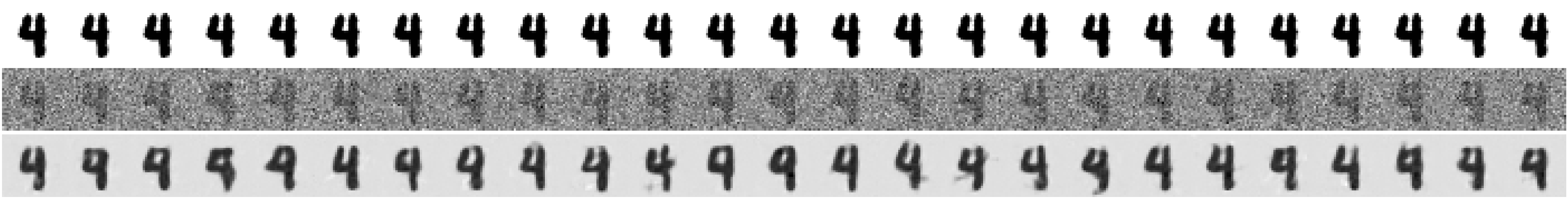}%
 \caption{(\emph{``failure case''}) For $\sigma=0.6$, there are only \emph{26} (out of 10K) cases from the MNIST test set where the certified radius $r(x)$ for smoothed linear classifier ($g_\sigma[h]$) is \emph{higher} than the one for the empirical Bayes smoothed linear classifier ($g_\sigma[\pi]$). Shown here is  one of those 26 cases: $g_\sigma[h](x)$ and $g_\sigma[\pi](x)$ are both accurate with $r(x)=0.55$ and $r(x)=0.40$ respectively. }\label{fig:failure}
 \end{center}
 \end{figure}



\clearpage

\section{Learning Algorithm}  \label{sec:main}
What we have achieved so far is to develop some intuitions on the effect of the Bayes estimation on robust classification in the framework of randomized smoothing. In particular, we proved in Proposition~\ref{prop:linear} that for \emph{linear classifiers}, we can gain robustness by contracting the noise, a result which is also quite intuitive captured in the schematics of Figure~\ref{fig:xhat}. The Bayes estimation effects that showed up in the proof of Proposition~\ref{prop:linear} are general effects, but assuming a linear classifier is in fact quite limiting. In this sectiom, we provide an algorithm for learning \emph{empirical Bayes smoothed classifiers} by bringing together randomized smoothing~\citep{cohen2019certified}, the neural empirical Bayes~\citep{saremi2019neural},  and smoothed adversarial training~\citep{salman2019provably}. 

\subsection{Empirical Bayes smoothed adversarial training.} \label{sec:XHAT}
Consider $H$ to be a \emph{soft classifier}, a map from the random variable $X$ in $\mathbb{R}^d$ to probability distributions on the finite set $\Omega$: $$H: X \rightarrow \mathbb{P}(\Omega),$$
which is set up in the context of a \emph{learning problem} for robust classification defined by a loss $\ell_\epsilon(\theta)$ in terms of the parameters $\theta \in \mathbb{R}^p$ (of a neural network) that defines $H$, and the hyperparameter $\epsilon \geq 0$ that controls the robustness-accuracy tradeoff which will become clear shortly. The important step in incorporating empirical Bayes is defining:
	\< \label{eq:Pi} \Pi(x,\theta)= \expectation\left(H\big(\widehat{x}(x+\varepsilon),\theta\big)\right), \>
	where the expectation is over $ \varepsilon \sim N(0,\sigma^2 I_d)$.\footnote{In practice, $\Pi$ is approximated by its Monte Carlo estimate:
	\< \nonumber \Pi(x) \approx \frac{1}{m}\sum_{j=1}^m H(\widehat{x}(x+\varepsilon_j)),\> where $m$ is a hyperparameter.} {\it This definition of the empirical Bayes soft classifier $\Pi$ is a generalization of the definition of empirical Bayes (hard) classifier $\pi$ in Equation~\ref{eq:pi}. There, we imported the parameters of $h$ to set up $\pi$, but here we intend to learn the  parameters of $\Pi$ which is shared with $H$.} 
\begin{remark}
	It is important to note that, by construction, $\Pi$ defined in~(\ref{eq:Pi}) is indeed a soft classifier, i.e. it is a mapping from $X$ to $\mathbb{P}(\Omega):$
$$\Pi: X \rightarrow \mathbb{P}(\Omega).$$
Also note that $\Pi$ has an explicit dependence on the parameters of the energy function $\phi$ which is not shown here. Throughout, we hid away the parameters of $\phi$. The assumption is that $\phi$ (for any $\sigma$ of interest) is learned in advance.
\end{remark}

In empirical Bayes smoothed adversarial training, we set up a min-max optimization problem to learn the parameters of $\Pi$. The learning problem is defined by the loss
	$$ \ell_\epsilon(\theta) = \expectation \max_{\Vert \deltax \Vert \leq \epsilon }  (-\log \Pi_k(x+\delta,\theta)),$$
	which we would like to minimize, where the expectation is over $(x,k)$ pairs, and $\Pi_k$ is the $k$th element of $\Pi$. In practice, the distribution over $(x,k)$ pairs is approximated by the empirical distribution over a training set with $(x_i,k_i)$ elements, where the empirical loss is given by
	\< \label{eq:ell_eps} \ell_\epsilon(\theta) \approx \frac{1}{n}\sum_{i=1}^n \max_{\Vert \deltax \Vert \leq \epsilon }  (-\log \Pi_{k_i}(x_i+\delta,\theta)),\>
	and the loss is optimized with \emph{stochastic gradients descent}:
	$$ \theta \leftarrow \theta - \nabla_\theta \tilde{\ell}_\epsilon(\theta),$$
	where $\tilde{\ell}_\epsilon(\theta)$ is the stochastic loss evaluated on randomly selected mini batches. There are three different types  of gradient evaluations \emph{per parameter update}:
	\begin{itemize}
		\item[(i)] $\nabla_y \phi(y)$: The gradient of the \emph{energy function} $\phi$ in $\mathbb{R}^d$ to evaluate the Bayes estimator \< \nonumber \widehat{x}(y_{ij}) = y_{ij} - \sigma^2 \nabla_y \phi(y_{ij}),\text{ where }y_{ij}=x_i+\varepsilon_j,\>
		which is used to compute 
		\< \label{eq:m} \Pi(x_i,\theta) \approx \frac{1}{m}\sum_{j=1}^m H(\widehat{x}(x_i+\varepsilon_j),\theta).\>		
		\item[(ii)] $\nabla_x \log \Pi_k(x,\theta)$: The gradient of the \emph{empirical Bayes (soft) classifier} $$\nabla_x \log \Pi_{k_i}(x_i,\theta)$$ for the inner maximization problem to evaluate the (stochastic) loss $\tilde{\ell}_\epsilon(\theta)$. This optimization problem  is restricted to the $\ell_2$ ball
		$$ B(x_i,\epsilon) = \{x: \Vert x-x_i\Vert\leq \epsilon\},$$
		and approximated with projected gradient descent (PGD),  where the number of steps taken in PGD is a hyperparameter. \emph{Note that $\nabla \phi$ must be computed at each step of the PGD.	}
		\item[(iii)] $\nabla_\theta \tilde{\ell}_\epsilon(\theta)$: The gradient of \emph{stochastic loss} to update the parameters.
		\end{itemize} 
Following learning, the empirical Bayes (hard) classifier $\pi$ is obtained,
\<\pi(x) = \argmax_k \Pi_k(x,\theta^*),\>
where the implicit dependence of $\theta^*$ (and therefore $\pi$) on $\epsilon$ is understood. The $\ell_2$ (provably) robust classifier $g_\sigma[\pi]$ is constructed as before (see Eq.~\ref{eq:gpi}) with $r(x)$ computed following~\citep{cohen2019certified}. 

\subsection{Experiments.} \label{sec:mnist2}
Here, we revisit the MNIST experiments in Section~\ref{sec:mnist1}. There,  we looked at the provable robust accuracy of the \emph{empirical Bayes smoothed classifier} that was constructed from a \emph{linear classifier}. That ``simple'' construction brought us somewhat close to the state of the art \emph{empirical defenses} (see Figure 2a in~\citep{schott2018towards} for a compilation of several defenses). In this section we present results for \emph{learning} the empirical Bayes smoothed classifier as set up by the loss given in Equation~\ref{eq:ell_eps}. The framework to learn  \emph{empirical Bayes smoothed classifiers} with adversarial training is referred to as \xhateps~ due to the central roles played by the Bayes estimator $\widehat{x}(y)$ and the attack model set by $\epsilon$ in the learning algorithm.

The results are presented in Figure~\ref{fig:mnist-2}, comparing \xhateps~ with XHAT$_0$ ($\epsilon$=0) and also with ~\emph{SmoothAdv}~\citep{salman2019provably}.  As observed in Figure~\ref{fig:mnist-2}, the best certified accuracies are obtained for \xhateps; they are aggregated in Table~\ref{table:mnist-2} and compared with the best empirical robust accuracies reported in~\citep{schott2018towards}; \emph{over three ranges of radii, \xhateps~ provable accuracy improves the reported state-of-the-art empirical accuracies.} The results for \xhateps~  can be improved in a straightforward fashion by taking more noisy samples for certification, but $n_c=10^6$ samples per $x_i$ is already an expensive computation and taking more samples comes with diminishing returns as we explain in the next section. The indirect way to improve the results presented here, especially in the range of radii [1.5,~2.0], is to explore \xhateps~ for larger $\sigma$ as the robust $\ell_2$ radius $r(x)$ scales linearly with $\sigma$. Increasing $r(x)$ in this fashion is subtle though since there will be accuracy tradeoffs. We did not do an exhaustive hyperparameter search for \xhateps~ which could also improve the results presented here. The code will be made public for such explorations.

In the next section, we step back and discuss the fundamental challenges of randomized smoothing in very high dimensions. We also discuss some ideas to go beyond the single-step Bayes estimation of the \emph{empirical Bayes}  with an ensemble of smoothed densities at different $\sigma$.
 \begin{table}[b!]
\caption{(\emph{certified $\ell_2$-robust accuracy of \xhateps~ versus the empirical defense ``analysis by synthesis'' (ABS) ~\citep{schott2018towards} robust accuracy on MNIST test set}). Here, we aggregate the highest certified accuracies of \xhateps~ obtained by taking $n_c=10^6$ noisy samples (see Fig.~\ref{fig:mnist-2}). The results are compared with the best empirical defense results presented in~\citep{schott2018towards}.}
\label{table:mnist-2}
\begin{center}
\begin{tabular}{l| c c c c}
\toprule
$\ell_2$ radius of perturbations& 0.5 & 1.0 & 1.5 & 2.0  \\
\midrule
\emph{certified robust accuracy,}~ \xhateps~  & \emph{98} & \emph{95} & \emph{81} & \emph{57}  \\
empirical robust accuracy, ABS & 92  & 85& 80 & 67 \\  
\bottomrule
\end{tabular}
\end{center}
\end{table}

 \begin{figure}[b!]
 \begin{center}
  \begin{subfigure}[$\sigma=0.3,~\epsilon=1.0$]{\includegraphics[width=0.45\textwidth]{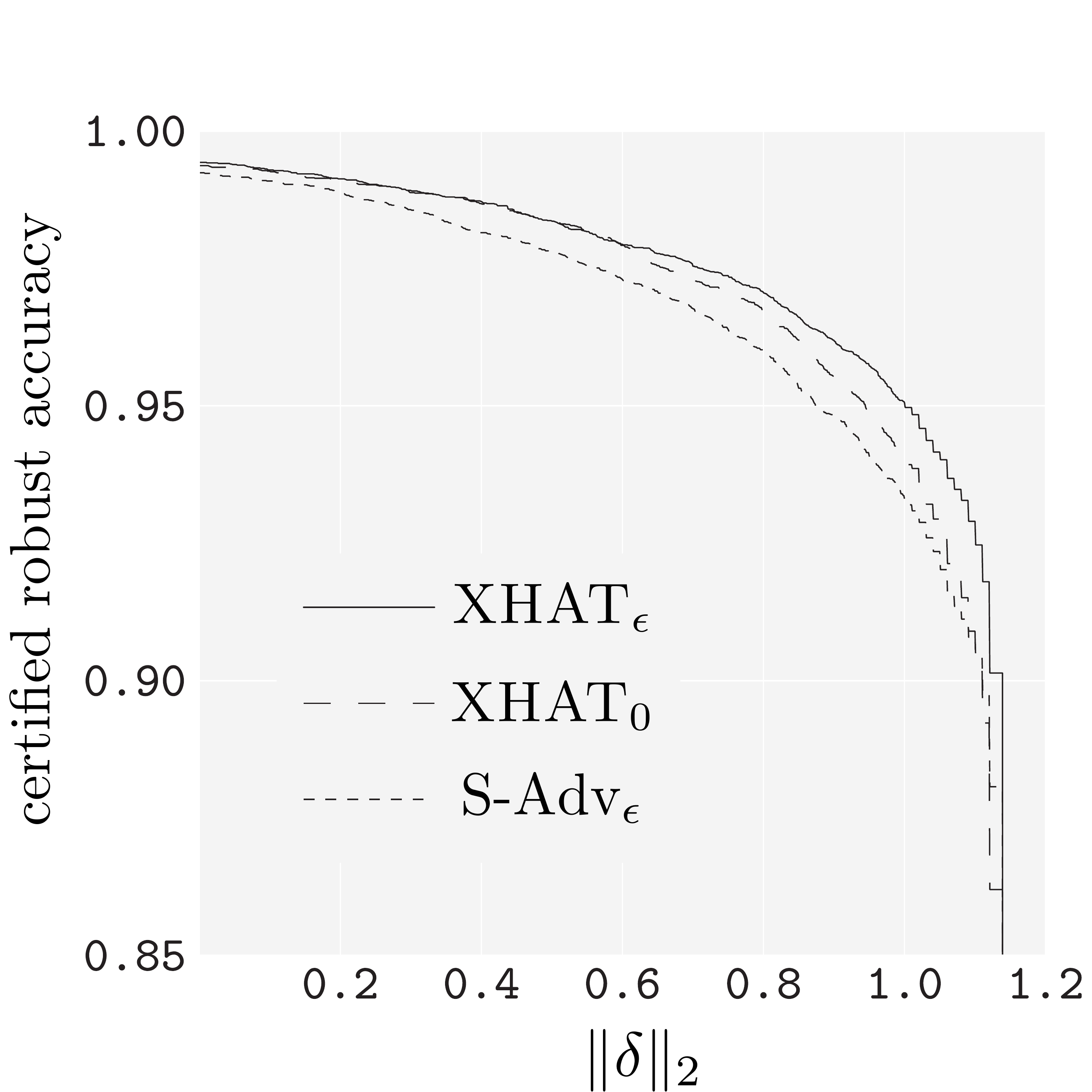}%
 }
 \end{subfigure} 
 \begin{subfigure}[$\sigma=0.6,~\epsilon=1.0$]{\includegraphics[width=0.45\textwidth]{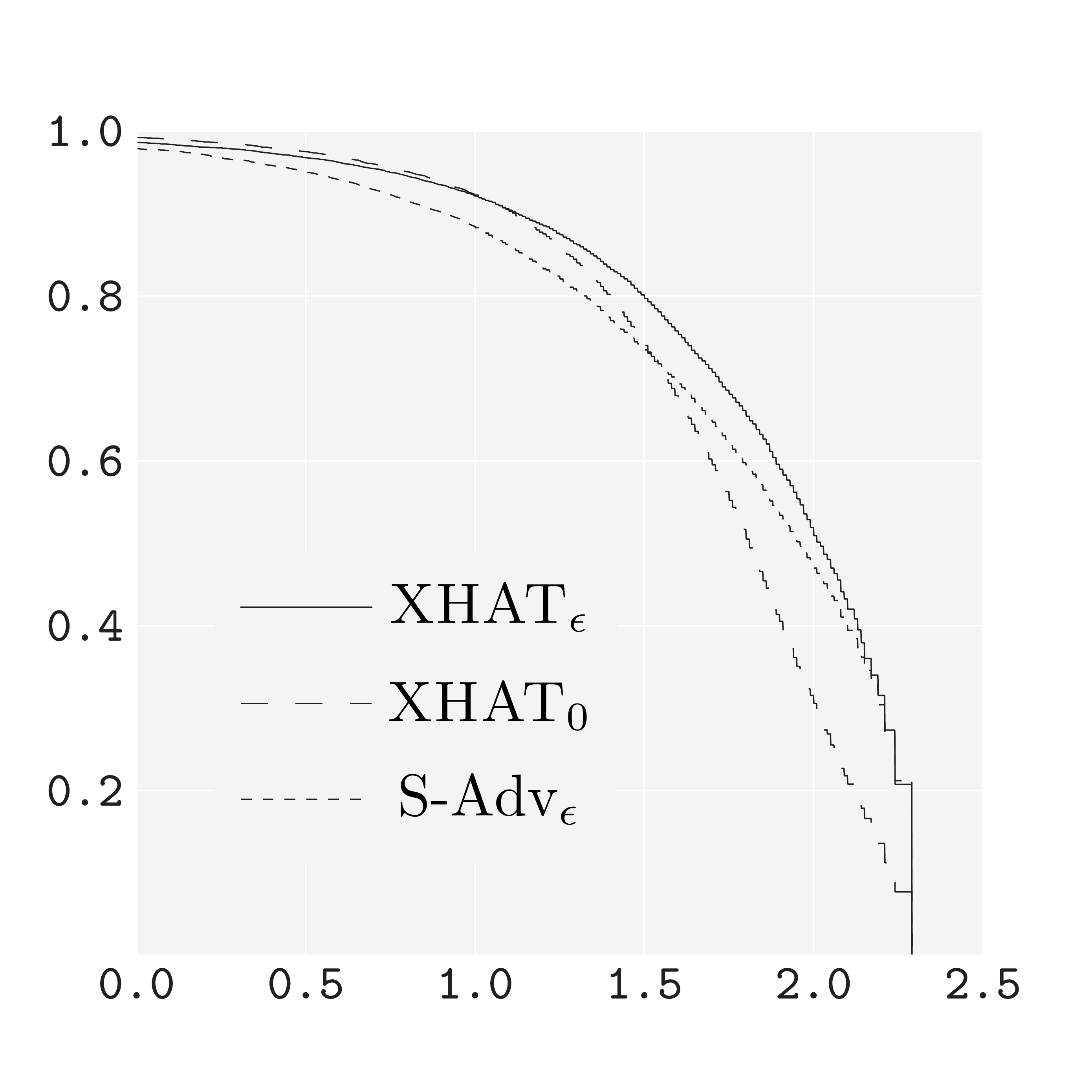}%
 }
 \end{subfigure}	
  \begin{subfigure}[$\sigma=0.3,~\epsilon=1.0$]{\includegraphics[width=0.45\textwidth]{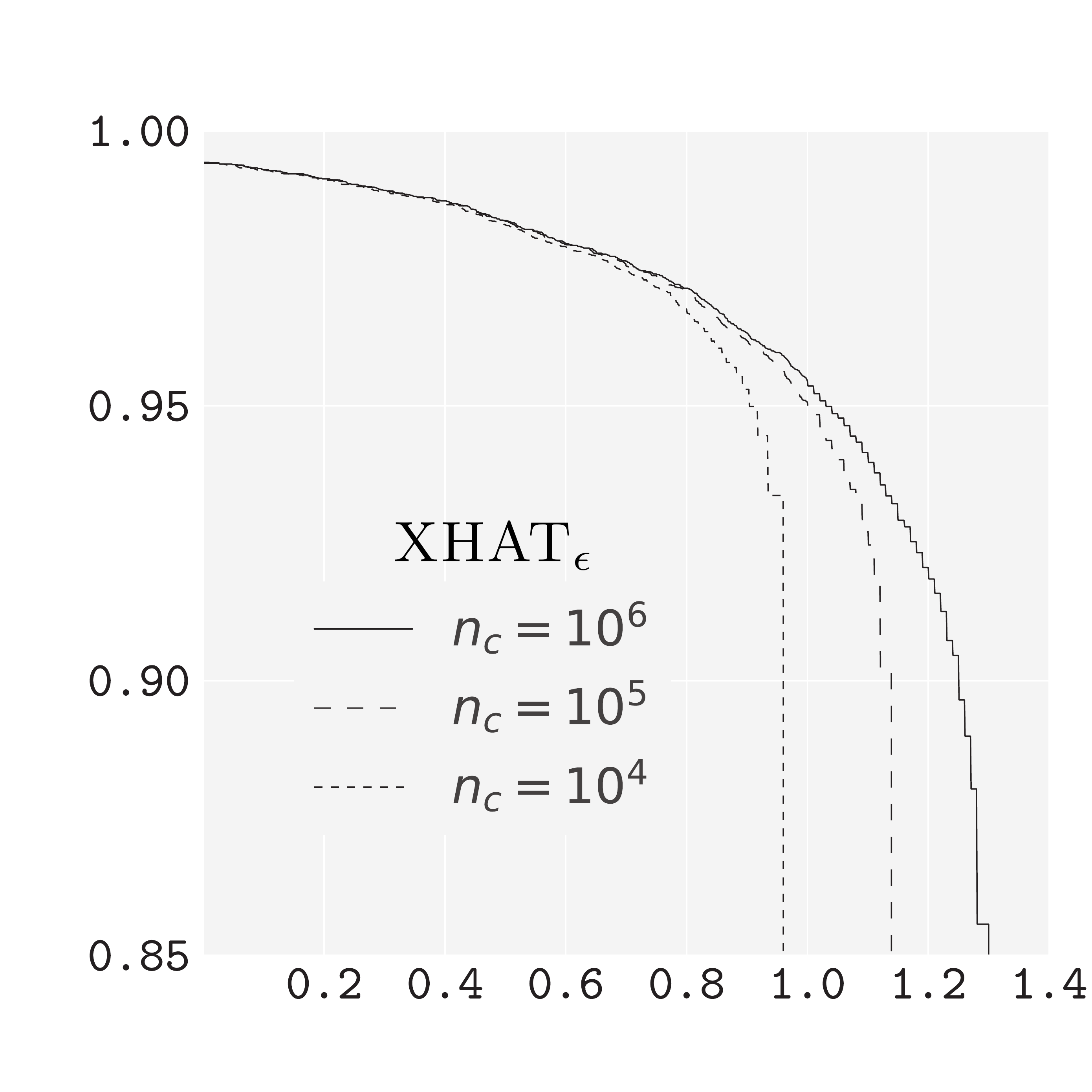}%
 }
 \end{subfigure}
 \begin{subfigure}[$\sigma=0.6,~\epsilon=1.0$]{\includegraphics[width=0.45\textwidth]{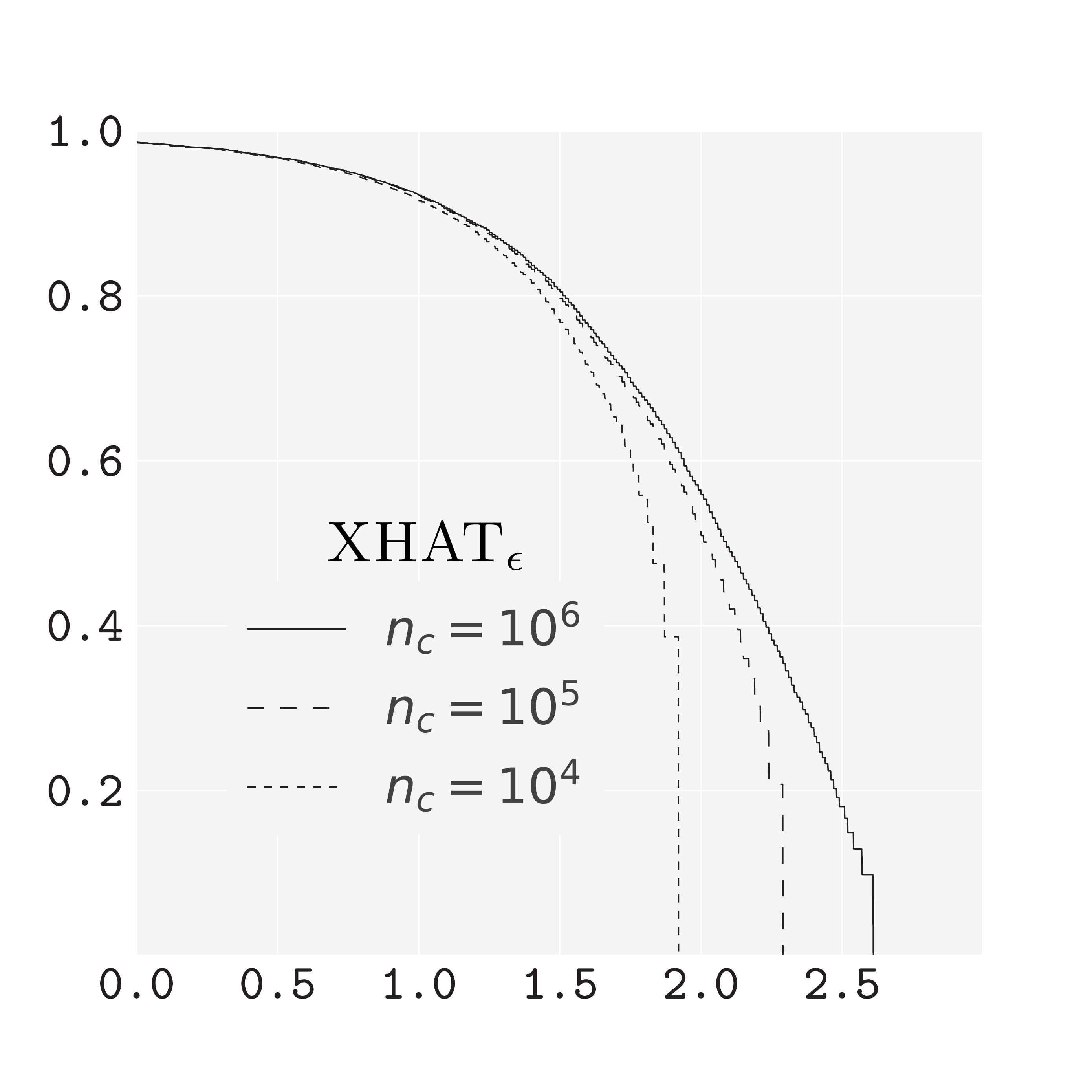}%
 }
 \end{subfigure} 

 \caption{ {\it (empirical Bayes smoothed classifier learned with adversarial training on MNIST)} The certified robust accuracy vs. $\Vert \delta \Vert_2$, the $\ell_2$ radius of the perturbations added to inputs from the test set. \xhateps~ is the algorithm that optimizes the loss $\ell_\epsilon(\theta)$ given in  Equation~\ref{eq:ell_eps}, XHAT$_0$ is the setting $\epsilon=0$, and S-Adv$_\epsilon$ is short for \emph{SmoothAdv}~\citep{salman2019provably} with \texttt{epsilon} set to $\epsilon$. \xhateps~ and \emph{SmoothAdv} are  trained with the same $\epsilon$, hyperparameters, and learning rate schedules: $\it \epsilon=1.0$, $\it m=1$ (see Eq.~\ref{eq:m}), and the number of PGD steps taken is set to \emph{16}. In (\emph{a}) and (\emph{b}), we compare \xhateps~ with XHAT$_0$ and \emph{SmoothAdv} for $\sigma=0.3$ and $\sigma=0.6$. In (c) and (d) we repeat the \xhateps~ experiments by varying $n_c$, the number of noisy samples $\varepsilon_j$ added to each sample $x_i$ for certifying the results (see Fig.~\ref{fig:vizxhat}). The noise values $\sigma=0.3$ and $\sigma=0.6$ was chosen based on their geometric interpretation in terms of the degree of overlap between ``i-spheres'' (see Figure 2 and Table 1 in~\citep{saremi2019neural}). 
 }
 \label{fig:mnist-2}
 \end{center}
 \end{figure}

 \clearpage

\section{Smoothing revisited}
\subsection{Fundamental challenges} \label{sec:missingd} Returning to the conceptual perspective on the ``art of smoothing'' in the introduction, we go over some mostly qualitative arguments regarding randomized smoothing in very high dimensions $d \gg 1$; for the sake of argument assume $d=10^{6}$. This is to highlight both the fact that the expression for $r(x)$ does not scale with $\sqrt{d}$ (the natural $\ell_2$ scale associated with Gaussian noise in high dimensions) and the fact that in high dimensions the \emph{concentration} of $Y=X+N(0,\sigma^2 I_d)$ may be vastly different than $X$. The latter was discussed recently and proven  with some assumptions under the topic ``manifold disintegration-expansion''~\citep{saremi2019neural}.

Consider $d\gg 1$ and assume there are no memory constraints to evaluate $h(x+\varepsilon_j)$ for a single noisy sample $x+\varepsilon_j$, where $\varepsilon_j \sim N(0,\sigma^2 I_d)$. However, we do have \emph{time constraints}  enforced by $n_c$ the number of noisy samples for certification. In randomized smoothing we can certify a radius of up to$$ r_{\rm max}(n_c,\alpha) = \sigma \cdf^{-1}(\alpha^{-n_c}),$$
obtained under the assumption that $h(x+\varepsilon_j)=h(x)$ for every $\varepsilon_j$ sampled in the certification~\citep{cohen2019certified,hung2019rank}, e.g. setting $n_c=10^5,~\alpha=0.001$, we get the maximum radius of $$r_{\rm max}(10^5,10^{-3}) \approx 3.81~\sigma.$$ As discussed in~\citep{cohen2019certified}, relaxing the failure probability $\alpha$ and increasing our budget $n_c$ only comes with diminishing returns, e.g. $$r_{\rm max}(10^{10},10^{-1}) \approx 6.23~\sigma.$$ For simplicity, assume we have a budget s.t. $r_{\rm max}=4\sigma.$ Therefore, to certify a classifier at $\ell_2$ radius $\epsilon$, we should set $\sigma \geq \sigma_{\rm min}$, where $\sigma_{\rm min}= \epsilon/4.$ Note that in high dimensions, the Gaussian is concentrated far away from its mode at the $\ell_2$ norm $\sigma \sqrt{d}$~\citep{tao2012topics}, more precisely:  $$N(0,\sigma^2 I_d) \approx \text{Unif}(\sigma \sqrt{d} S_{d-1}).$$
\emph{Therefore geometrically, in high dimensions, we smooth the classifier at the scale $\sigma \sqrt{d}$ but we can only get certification radius of order $4 \sigma$ (at best) in return; in addition, this mismatch between the scale we smooth the classifier and the radius up to which we can certify the smoothed classifier becomes larger in higher dimensions.} 

This was acknowledged in~\citep{cohen2019certified} but it was put  aside after arguments by visual inspection around Figure 4 in the paper.\footnote{The visual inspection was also augmented by a result on how pooling can in effect gain us a factor of dimensionality that is lost in the vanilla smoothing. This is discussed in the appendix in~\citep{cohen2019certified}. However, it is clear that with pooling \emph{in pixel space} we will also lose in accuracy. In some sense, this discussion is also related to the so-called ``non-robust features''~\citep{ilyas2019adversarial}, which both pooling (in pixel space) and Gaussian noise would (dramatically) affect, especially in higher dimensions.}~A related problem is the concentration of $Y=X+N(0,\sigma^2 I_d)$ compared to $X$. It may appear visually (in \emph{our} visual perception) that for a fixed $\sigma$ the higher resolution images loose less content, but as $d$ increases there are many more directions to ``escape'' the data manifold where $X$ is concentrated.  This has been discussed under the subject  \emph{manifold disintegration-expansion} in~\citep{saremi2019neural} with some analytical results where $\text{dim}(X)=d_{\sharp} \ll d$, and it can be shown that $\text{dim}(Y)= d-1$. From this perspective, ``restoring'' the data manifold with Bayes estimation, one should be able to see more gains  in higher dimensions. However these arguments are far from rigorous and unfortunately very difficult to formalize in high dimensions, e.g. it is not clear how and to what extent the manifold is ``disintegrated'' at \emph{moderately} large dimensions, putting aside the fuzzy notion of ``manifold'' for a data distribution itself.\footnote{Not related to our focus on $\ell_2$ robustness, but there are also fundamental limitations in using randomized smoothing (with Gaussian noise) for certification for $\ell_p$ attacks for $p>2$, where again the dimension $d$ plays a big role~\citep{kumar2020curse}.}

%



\subsubsection{\bf Other discussions around randomized smoothing}
Despite the conceptual simplicity and the statistical guarantees, practical considerations may dictate that alternatives to the paradigm of randomized smoothing are necessary.
We have seen that constructing the smooth classifier $g$ allows us to obtain statistical robustness guarantees around a data point using a simple sampling procedure.
This Monte Carlo sampling may be computationally expensive (e.g. $10^5$ computations of the base classifier per data point). 
In practice, this may not be a major issue since the \emph{certification} only needs to be performed once before deployment.
But what about computing the \emph{prediction} using a certified $g$ for each new image after deployment?
~\citep{cohen2019certified} suggested that much fewer samples can be used for prediction compared to certification ($10^2$ vs. $10^5$) at the cost of $g$ abstaining from prediction more often.
This is encouraging, but still implies that the cost of deploying the certified classifier is $100\times$ that of using the base classifier per image!
In principle, a potential alternative is to directly train a base classifier whose predictions are provably constant in a well-defined neighbourhood of its inputs, so that smoothing is not required for certification.
This is extremely difficult to accomplish for modern image classifiers based on large and deep neural networks; see \citep{wong2018scaling} and references therein for some initial steps in this direction.
Nevertheless, it is clear that training more robust base classifiers is in general the best recipe for obtaining smoothed classifiers with higher certified accuracy, so there is much to be shared between these two lines of research.

 \subsection{Beyond Bayes estimation.} \label{sec:BBE}

\emph{Can we do better than the Bayes estimator of $X$, $\widehat{x}(y)=y-\sigma^2 \nabla \phi(y)$, which we have relied on so far?} In this section we put forward some ideas but in pursuing them there are computational challenges for certified robust classification that must be addressed in future research. Indeed the energy function $\phi$ has more utilities than its use for ``denoising''\textemdash the \emph{single-step} empirical Bayes least-squares estimation\textemdash we discuss next. So far, we assumed that the noise level is dictated to us by randomized smoothing and we dropped the implicit dependence of the energy function $\phi$ on $\sigma$, but $\phi_\sigma$ for different regimes of $\sigma$ are qualitatively different as highlighted below for the problem of robust classification. 
\begin{itemize}
	\item[(B1)] One problem with single-step Bayes estimation is the variance of the estimator, which scales as $\sigma^2$. \emph{The first idea is to remove the noise ``as much as possible'' using the attractors of $\phi$}. The energy function $\phi$ has a nice property that its local minima could in principle be used for \emph{memory storage}: this is the notion of associative memory called NEBULA that was introduced in~\citep{saremi2019neural}. NEBULA is governed by the gradient flow that in continuous time takes the form:
	$$ y'(t) = -\nabla_y \phi_\sigma(y(t)),$$
	where the memory is designed to be ``well-behaved'' in some regimes of $\sigma$. For robust classification, one natural idea is to follow up the Bayes estimation with gradient flow to the attractors of $\phi_{\sigma'}$, where $\sigma'$ is typically much smaller than $\sigma$ used in randomized smoothing. \emph{This construction is conceptually intriguing since the Gaussian ball $x+\varepsilon$ will be mapped to finite number of attractors, a measure-zero set by construction.} Instead of $\widehat{x}(y)$ that we relied on in Sections~\ref{sec:MBC} and~\ref{sec:main}, we now have a complex function $A(y,\sigma')$ that takes $y$ sampled from $Y=X+N(0,\sigma^2 I_d)$, and run gradient flow to one of the attractors of $\phi_{\sigma'}$ for $\sigma' \ll \sigma$. As an example, it is straightforward to check that by replacing $\widehat{x}$ with $A$, the analysis of Proposition~\ref{prop:linear} becomes trivial since the attractor $A$ maps $\mathbb{R}^d$ to a single point at the origin. However the attractors (and their dynamics) are in general very complex (see Figures 9 and 10 in~\citep{saremi2019neural}).
		
	\item[(B2)] The second approach is simpler for complex distributions and better understood both theoretically and empirically. It is to follow up the Bayes estimation $$\widehat{x}(y) = y - \sigma^2 \nabla \phi_\sigma(y) $$ with Langevin MCMC:
	\< \label{eq:walk} y_{t+1} = y_t - \delta^2 \nabla \phi_{\sigma'}(y_t)+\sqrt{2} \delta \varepsilon',~\varepsilon'\sim N(0,I_d),\>
	where $\sigma' \ll \sigma$, $\delta$ is the step size and $y_0=\widehat{x}(y)$. After $\tau\gg1 $ steps of Langevin ``walk'', $y_\tau$ is an \emph{exact sample}~\citep{mackay2003information} from $\phi_{\sigma'}$. Therefore we can use the Bayes estimation again\textemdash the ``jump'':
			\< \label{eq:jump} \widehat{x}(y_\tau) = y_\tau-\sigma'^2\nabla \phi_{\sigma'}(y_\tau),\>
			but this time the variance is smaller than before by $(\sigma'/\sigma)^2$.
This is the \emph{walk-jump sampling} introduced in~\citep{saremi2019neural}, but it is now tailored for randomized smoothing. Also, as explained in~\citep{saremi2019neural}, for ``small'' $\sigma$ one does not expect class mixing with a reasonable choice for $\tau$.   Of course, there are many different ways to implement a sampling-based approach to classification, e.g. the empirical defense ``analysis by synthesis''~\citep{schott2018towards} that we earlier compared \xhateps~ against is one example. The formalism there is based on \emph{conditional densities} of \emph{clean samples} but the proposal here is \emph{unconditional} and more importantly it is based on \emph{smoothed densities} (all the way!) which can be readily used in randomized smoothing as visualized in Figure~\ref{fig:walkjump}. 
\end{itemize}

\bigskip
 \begin{figure}[h!]
 \begin{center}
 \includegraphics[width=\textwidth]{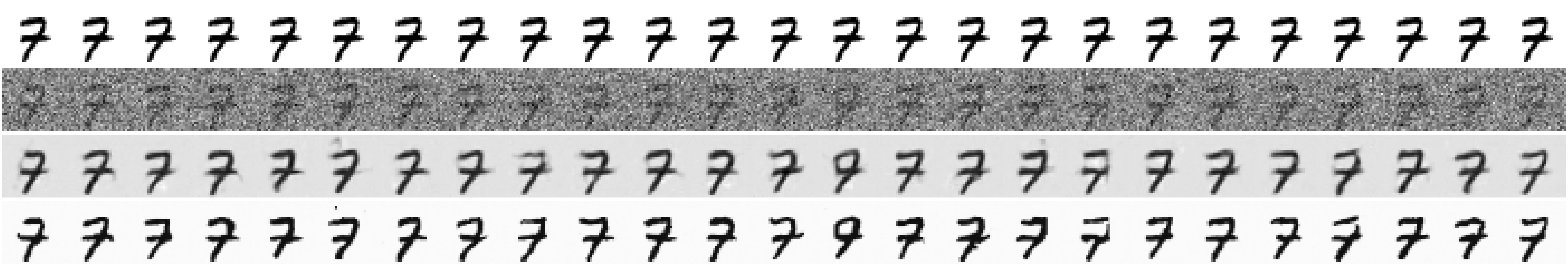}%
 \caption{(\emph{Beyond Bayes estimation}) An example of walk-jump sampling tailored for randomized smoothing. The top row is the ``clean'' sample from Figure~\ref{fig:vizxhat}. The second row shows $y_j=x+\varepsilon_j$, where $\varepsilon_j \sim N(0,\sigma^2 I_d)$, $\sigma=0.6$. The third row shows $\widehat{x}(y_j) = y_j -\sigma^2 \nabla \phi_\sigma(y_j)$ which are the inputs for the walk-jump sampling (Eqs.~\ref{eq:walk} and~\ref{eq:jump}) with $\sigma'=0.05$, $\delta=0.001$, and $\tau=100$. The fourth row shows the final jump samples $\widehat{x}(y_{j,\tau})$. (Note that one of the samples $y_j$ is mistaken for a ``9'', even to a human observer.)}\label{fig:walkjump}
 \end{center}
 \end{figure}

\section{Summary}
We finish with a summary.
\begin{itemize}
	\item We introduced \emph{empirical Bayes smoothed classifiers} and studied it theoretically for linear classifiers. The theoretical results are encapsulated in Proposition~\ref{prop:linear}. The assumptions in the proposition is quite limiting but the proof points to the key factor at play that are general, and on MNIST we showed that an empirical Bayes smoothed \emph{linear} classifier has certified $\ell_2$ robust accuracies in the ballpark of sophisticated empirical defenses.
	\item Motivated by Proposition~\ref{prop:linear}, we introduced the algorithmic framework \xhateps~ to \emph{learn} empirical Bayes smoothed classifiers. We revisited the MNIST results of empirical Bayes smoothed \emph{linear} classifiers and demonstrated that with \xhateps~ on only two values of $\sigma$, we can achieve \emph{provable robust accuracies} higher than the best empirical defenses on a range of radii. Having provable robust accuracies on par with empirical defenses is in general a difficult goal since empirical defenses typically have ``holes''; in the short history of research on  adversarial examples, it has typically been a matter of time (sometimes, few short days) to break a defense~\citep{athalyE2018obfuscated}.\footnote{Visit \url{https://simons.berkeley.edu/talks/tbd-76} for a presentation on this topic.}
	\item  In the closing, we proposed a theoretically plausible framework based on \emph{walk-jump sampling} to go beyond the single-step empirical Bayes estimation, which could potentially improve our results significantly. However, that comes with immense computational challenges due to the nature of statistical guarantees that one needs to meet in  certifying a smoothed classifier. Ultimately, certification (in the framework of randomized smoothing) has deep limitations since the smoothing of classifiers happens at a much larger scale than the radii we can certify them for, and in addition, the machinery is limited due its $\ell_2$ formulation. Robust classification based on \emph{smoothed densities} could also be used in \emph{empirical defenses} but we chose to explore its potentials for provable robust classification. 
	\item At its core, this work was based on~\citep{cohen2019certified} and~\citep{saremi2019neural}, both are very recent developments, and for developing \xhateps~ we greatly benefitted from~\citep{salman2019provably}. Looking at the Bayes estimation from the lens of denoising (see Remark~\ref{rem:denoise}), there are many references that one should consult: adding noise and denoising are indeed the very first ideas that come to mind for defending against adversaries, but if not done right, they are surprisingly brittle themselves~\citep{athalyE2018obfuscated}. See~\citep{xiE2019feature} for a recent successful example. 
	\item This work can also be seen as employing theoretical tools and algorithms in the realm of \emph{unsupervised learning} for the problem of  provable/certified robust classification, which has also been explored recently but from a very different starting point~\citep{carmon2019unlabeled}. This work is the very first attempt to bring in learning \emph{smoothed energy functions} to the problem of (certified) robust classification.      \end{itemize}

\subsection*{Acknowledgments} 
We would like to thank Francis Bach for discussions.

\bibliography{xhat.bib}
\bibliographystyle{alpha}

\end{document}